\theoremstyle{plain}
\newtheorem{theorem}{Theorem}
\newtheorem{lemma}{Lemma}
\newtheorem{example}{Example}
\theoremstyle{definition}
\newtheorem{definition}{Definition}
\newtheorem{assumption}{Assumption}
\theoremstyle{remark}
\title{Learning from Few Samples: Transformation-Invariant SVMs with Composition and Locality at Multiple Scales}
\author{%
  Tao Liu$^1$, P. R. Kumar$^1$, Ruida Zhou$^1$, Xi Liu$^2$\\
  $^1$Texas A\&M University, College Station, TX, USA\\
  $^2$Applied Machine Learning, Facebook AI, Menlo Park, CA, USA\\
  \texttt{\{tliu, prk, ruida\}@tamu.edu, xliu1@meta.com} \\
}
\begin{document}

\maketitle

\begin{abstract}
Motivated by the problem of learning with small sample sizes, this paper shows how to incorporate into support-vector machines (SVMs) those properties that have made convolutional neural networks (CNNs) successful. Particularly important is the ability to incorporate domain knowledge of invariances, e.g., translational invariance of images. Kernels based on the \textit{maximum} similarity over a group of transformations are not generally positive definite. Perhaps it is for this reason that they have not been studied theoretically. We address this lacuna and show that positive definiteness indeed holds \textit{with high probability} for kernels based on the maximum similarity in the small training sample set regime of interest, and that they do yield the best results in that regime. We also show how additional properties such as their ability to incorporate local features at multiple spatial scales, e.g., as done in CNNs through max pooling, and to provide the benefits of composition through the architecture of multiple layers, can also be embedded into SVMs. We verify through experiments on widely available image sets that the resulting SVMs do provide superior accuracy in comparison to well-established deep neural network benchmarks for small sample sizes.
\end{abstract}

\section{Introduction}
With the goal of learning when the number of training samples is small, and motivated by the success of CNNs \cite{lecun1989generalization, tan2019efficientnet}, we wish to endow SVMs with as much a priori domain knowledge as possible. 

One such important domain property for image recognition is translational invariance. An image of a dog remains an image of the same dog if the image is shifted to the left. Similarly, it is also rotation-invariant. More generally, given a group of transformations under which the classification of images is invariant, we show how to endow SVMs with the knowledge of such invariance. 

One common approach is data augmentation \cite{shorten2019survey, cubuk2018autoaugment}, where several transformations of each training sample are added to the training set. When applying SVMs on this augmented training set, it corresponds to a kernel that defines the similarity between two vectors $X_1$ and $X_2$ as the {\textit{average similarity}} between $X_1$ and all transformations of $X_2$. However the average also includes transformations that are maximally dissimilar, and we show that it leads to poor margins and poor classification results. More appealing is to construct a kernel that defines similarity as the \textit{maximum similarity} between $X_1$ and all transformations of $X_2$. We show that this kernel is positive definite with high probability in the small sample size regime of interest to us, under a probabilistic model for features. We verify this property on widely available datasets and show that the improvement obtained by endowing SVMs with this transformation invariance yields considerably better test accuracy. 

Another important domain property for image recognition is the ``locality" of features, e.g., an edge depends only on a sharp gradient between neighboring pixels. Through operations such as max-pooling, CNNs exploit locality at multiple spatial scales. We show how one may incorporate such locality into polynomial SVMs. 

Finally, their multi-layer architecture provides CNNs the benefits of composition \cite{daubechies2021nonlinear}. We show how one can iteratively introduce multiple layers into SVMs to facilitate composition. The introduction of multiple layers increases computational complexity, and we show how this can be alleviated by parallel computation so as to achieve a reduction of computation time by increasing memory.

We show experimentally that the resulting SVMs provide significantly improved performance for small datasets. Translational and rotational invariance embedded into SVMs allows them to recognize objects that have not already been centered in images or oriented in upright positions; referred to as transformed datasets in the sequel. The transformation-invariant SVMs provide significant improvements over SVMs as well as CNN benchmarks without data augmentation when the training set is small. For 100/200/500 training samples, the recognition accuracy of the MNIST dataset \cite{lecun-mnisthandwrittendigit-2010} is increased, respectively, from the figures of 68.33\%/83.20\%/91.33\% reported by the CNNs optimized over architectures and dimensions \cite{d2020structural} to, respectively, 81.55\%/89.23\%/93.11\%. Similar improvements are also obtained in the EMNIST Letters \cite{cohen2017emnist} and Transformed MNIST datasets. Computational results reported here are for small datasets, handled efficiently by LIBSVM \cite{chang2011libsvm}.

\subsection{Background}
In the early 2000s, SVMs \cite{cortes1995support} were one of the most effective methods for image classification \cite{lu2007survey}. They had a firm theoretical foundation of margin maximization \cite{vapnik1995nature,boser1992training} that is especially important in high dimensions, and a kernel method to make high-dimensional computation tractable. However, with the advent of CNNs and the enhanced computing power of GPUs,  SVMs were replaced in image classification. One reason was that CNNs were able to incorporate prior knowledge. The pioneering paper \cite{lecun1989generalization} emphasizes that ``It is usually accepted that good generalization performance on real-world problems cannot be achieved unless some a priori knowledge about the task is built into the system. Back-propagation networks provide a way of specifying such knowledge by imposing constraints both on the architecture of the network and on its weights. In general, such constraints can be considered as particular transformations of the parameter space." It further mentions specifically that, ``Multilayer constrained networks perform very well on this task when organized in a hierarchical structure with shift-invariant feature detectors." Indeed, CNNs have successfully incorporated several important characteristics of images. One, mentioned above (called shift-invariance), is translational invariance, which is exploited by the constancy, i.e., location independence, of the convolution matrices. A second is locality. For example, an ``edge" in an image can be recognized from just the neighboring pixels. This is exploited by the low dimensionality of the kernel matrix. A third characteristic is the multiplicity of spatial scales, i.e., a hierarchy of spatial ``features" of multiple sizes in images. These three properties are captured in modern CNNs through the ``pooling" operation at the $(\ell+1)$-th layer, where the features of the $\ell$-th layer are effectively low-pass filtered through operations such as max-pooling. More recently, it has been shown that depth in neural networks (NNs) of rectified linear units (ReLUs) permits composition, enhances expressivity for a given number of parameters, and reduces the number needed for accurate approximations \cite{daubechies2021nonlinear}. 

Generally, neural networks have become larger over time with the number of parameters ranging into hundreds of millions, concomitantly also data-hungry, and therefore inapplicable in applications where data is expensive or scarce, e.g., medical data \cite{Willwe2020}. For these reasons, there is an interest in methodologies for learning efficiently from very few samples, which is the focus of this paper.

\subsection{Related Work}
There are mainly two sets of studies: on transformation-invariant kernels and on local correlations.

\textbf{Transformation-Invariant Kernels.} There are two major routes to explore transformation invariant kernels \cite{scholkopf2002learning, lauer2008incorporating}. The most widely used is based on Haar-integration kernels, called ``average-fit'' kernels in this paper, which average over a transformation group \cite{schulz1994constructing, haasdonk2007invariant, ghiasi2010learning, mroueh2015learning, mairal2014convolutional, mei2021learning}. Although trivially positive definite, they appear to produce poor results in our testing (Section \ref{Experimental Evaluation}). Mairal et al. \cite{mairal2014convolutional} reported some good results when a large dataset is available for pre-training bottom network layers unsupervised \cite{ranzato2007unsupervised}, but our focus is on data-scarce situations where no such large dataset is available. This paper concentrates on ``best-fit'' kernels, which are based on the maximum similarity over transformations. ``Jittering kernels'' \cite{decoste2000distortion, decoste2002training} calculate the minimum distance between one sample and all jittered forms of another sample, analogous to this paper. Instead of measuring the minimum distance between samples, tangent distance kernels \cite{simard1998transformation, haasdonk2002tangent} use a linear approximation to measure the minimum distance between sets generated by the transformation group, which can only guarantee local invariance. In comparison, our ``best-fit'' kernel is defined as the maximum inner product between one sample and all transformed forms of another sample, which enjoys global invariance and differs from jittering kernels for non-norm-preserving transformations (e.g., scaling transformations). Although ``best-fit'' kernels enjoy a good performance, they are not guaranteed to be positive definite, thus the global optimality of the SVM solution cannot be guaranteed theoretically. We address this lacuna and show that positive definiteness indeed holds \textit{with high probability} in the small training sample set regime for the ``best-fit'' kernel. Additionally, we show that locality at multiple scales can further improve the performance of ``best-fit'' kernels. We note that there were also some works that treat an indefinite kernel matrix as a noisy observation of some unknown positive semidefinite matrix \cite{chen2008training, ying2009analysis}, but our goal is to analyze conditions that make the ``best-fit'' kernel positive definite.

\textbf{Local Correlations.}
Since ``local correlations," i.e., dependencies between nearby pixels, are more pronounced than long-range correlations in natural images, Scholkopf et al. \cite{scholkopf1998prior} defined a two-layer kernel utilizing dot product in a polynomial space which is mainly spanned by local correlations between pixels. We extend the structure of two-layer local correlation to multilayer architectures by introducing further compositions, which gives the flexibility to consider the locality at multiple spatial scales. We also analyze the corresponding time and space complexity of multilayer architectures.

\section{Kernels with Transformational Invariance}
To fix notation, let  $\mathcal{S} = \{(X_1, y_1), \dots, (X_n, y_n)\}$ be a set of $n$ labeled samples with $m$-dimensional feature vectors $X_i = (X^{(1)}_i, \ldots, X^{(m)}_i)^T \in \mathcal{X} \subset \mathbb{R}^m$ and labels $\ y_i \in \mathcal{Y}$. 

\subsection{Transformation Groups and Transformation-Invariant Best-Fit Kernels} \label{sec:group-kernel}
We wish to endow the kernel of the SVM with the domain knowledge that the sample classification is invariant under certain transformations of the sample vectors. Let ${\cal{G}}$ be a transformation group that acts on $\mathcal{X}$, i.e., for all $S,T,U \in {\cal{G}}$: (i) $T$ maps $\mathcal{X}$ into $\mathcal{X}$; (ii) the identity map $I \in {\cal{G}}$; (iii) $ST \in {\cal{G}}$; (iv) $(ST)U = S(TU)$; (v) there is an inverse $T^{-1} \in {\cal{G}}$ with $T T^{-1} = T^{-1}T=I$.

We start with a base kernel $K_{base}: \mathcal{X} \times \mathcal{X} \to \mathcal{R}$ that satisfies the following properties:
\begin{enumerate}[itemsep=0mm]
\item 
Symmetry, i.e., $K_{base}(X_i,X_j)=K_{base}(X_j,X_i)$;
\item 
Positive definiteness, i.e., positive semi-definiteness of its Gram matrix: $\sum_{i=1}^n \sum_{j=1}^n \alpha_i \alpha_j K_{base}(X_i, X_j) \geq 0$ for all $\alpha_1, \dots, \alpha_n \in \mathbb{R}$;
\item 
$K_{base}(TX_i,X_j) = K_{base}(X_i, T^{-1}X_j), \forall T \in {\cal{G}}$.
\end{enumerate}
Define the kernel with the ``best-fit" transformation over ${\cal{G}}$ by
\begin{align}
    K_{{\cal{G}},best,base} (X_i, X_j) := \sup_{T \in {\cal{G}}} K_{base}(T X_i, X_j).
\end{align}
\begin{lemma}
    \label{lemma:sym}
    $K_{{\cal{G}},best,base}$ is a symmetric kernel that is also transformation-invariant over the group ${\cal{G}}$, i.e., 
    $K_{{\cal{G}},best,base} (TX_i, X_j)$ = $K_{{\cal{G}},best,base}(X_i, X_j) $.
\end{lemma}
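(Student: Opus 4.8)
The plan is to prove both assertions by a reindexing argument on the supremum, exploiting the group structure of $\mathcal{G}$ together with the intertwining property \eqref{symmetry requirement} and the symmetry of $K_{base}$. Positive definiteness (property 2) plays no role here; it is relevant only to the later positive-definiteness results.

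First I would establish the transformation invariance. Fixing $T \in \mathcal{G}$ and unfolding the definition \eqref{SIK},
\[
K_{{\cal{G}},best,base}(TX_i, X_j) = \sup_{S \in \mathcal{G}} K_{base}\bigl(S(TX_i), X_j\bigr) = \sup_{S \in \mathcal{G}} K_{base}\bigl((ST)X_i, X_j\bigr),
\]
where the second equality uses the compatibility of the action with the group product (axioms (iii)--(iv)). The key observation is that the map $S \mapsto ST$ is a bijection of $\mathcal{G}$ onto itself, with inverse $R \mapsto RT^{-1}$ by axiom (v). Hence as $S$ ranges over $\mathcal{G}$, the product $ST$ ranges over exactly the same set $\mathcal{G}$, so the supremum is taken over precisely the same collection of real values and equals $\sup_{R \in \mathcal{G}} K_{base}(RX_i, X_j) = K_{{\cal{G}},best,base}(X_i, X_j)$.

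For the symmetry I would apply \eqref{symmetry requirement} to rewrite $K_{base}(TX_i, X_j) = K_{base}(X_i, T^{-1}X_j)$, and then the symmetry of the base kernel (property 1) to obtain $K_{base}(X_i, T^{-1}X_j) = K_{base}(T^{-1}X_j, X_i)$. Taking the supremum over $T \in \mathcal{G}$ and reindexing via the bijection $T \mapsto T^{-1}$, which maps $\mathcal{G}$ onto itself by axiom (v), gives $K_{{\cal{G}},best,base}(X_i, X_j) = \sup_{S \in \mathcal{G}} K_{base}(SX_j, X_i) = K_{{\cal{G}},best,base}(X_j, X_i)$.

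The argument is essentially formal, and I do not expect a genuine obstacle. The one point that warrants care is the justification that reindexing by a group bijection leaves the supremum unchanged: this holds because $\sup$ depends only on the underlying set of values $\{K_{base}(SX_i, X_j) : S \in \mathcal{G}\}$, and a bijective change of the index variable reproduces exactly that set. In particular, no assumption that the supremum is attained is needed, since we are only equating two suprema taken over identical value sets.
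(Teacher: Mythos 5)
Your proposal is correct and follows essentially the same route as the paper's proof: symmetry via property \eqref{symmetry requirement} combined with the symmetry of $K_{base}$ and the reindexing $T \mapsto T^{-1}$, and transformation invariance via the bijection $S \mapsto ST$ of $\mathcal{G}$ onto itself. Your explicit remark that the supremum depends only on the set of values, so no attainment assumption is needed, is a careful touch the paper leaves implicit, but it does not change the argument.
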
 

\textit{Proof.} 
The symmetry follows since
    \begin{align*}
        &K_{{\cal{G}},best,base} (X_i, X_j) = \sup_{T \in {\cal{G}}} K_{base}(T X_i, X_j) = \sup_{T \in {\cal{G}}} K_{base}(X_i, T^{-1} X_j) \\
        =& \sup_{T \in {\cal{G}}} K_{base}(T^{-1} X_j, X_i) = \sup_{T^{-1} \in {\cal{G}}} K_{base}(T^{-1} X_j, X_i) = K_{{\cal{G}},best,base} (X_j, X_i).
    \end{align*}
    The transformational invariance follows since
        \begin{align*}
        &K_{{\cal{G}},best,base} (TX_i, X_j) = \sup_{S \in {\cal{G}}} K_{base}(ST X_i, X_j) = \sup_{UT^{-1} \in {\cal{G}}} K_{base}(U X_i, X_j) \\
        =& \sup_{U \in {\cal{G}}} K_{base}(U X_i, X_j) = K_{{\cal{G}},best,base} (X_i, X_j). 
        \end{align*}

\noindent \textit{The Translation Group:}
Of particular interest in image classification is the group of translations, a subgroup of the group of transformations. Let $X_i = \{ X_i^{(p,q)}: p \in [m_1], q \in [m_2]\}$ denote a two-dimensional $m_1 \times m_2$ array of pixels, with $m = m_1m_2$. Let $T_{rs} X_i := \{ X_i^{((p-r) \operatorname{mod} m_1, (q-s) \operatorname{mod} m_2)}: p \in [m_1], q \in [m_2]\}$ denote the transformation that translates the array by $r$ pixels in the $x$-direction, and by $s$ pixels in the $y$-direction. The translation group is ${\cal{G}}_{trans} :=\{T_{rs}: r \in [m_1], s \in [m_2] \}$. For notational simplicity, we will denote the resulting kernel $K_{{\cal{G}}_{trans},best,base}$ by $K_{TI,best,base}$.

\subsection{Positive Definiteness of Translation-Invariant Best-Fit Kernels}
\label{subsec:pd}
There are two criteria that need to be met when trying to embed transformational invariance into SVM kernels. (i) The kernel will need to be invariant with respect to the particular transformations of interest in the application domain. (ii) The kernel will need to be positive definite to have provable guarantees of performance. 
$K_{TI,best,base}$ satisfies property (i) as established in Lemma \ref{lemma:sym}. Concerning property (ii) though, in general, $K_{TI,best,base}$ is an indefinite kernel. We now show that when the base kernel is a normalized linear kernel, $K_{linear}(X_i, X_j) := \frac{1}{m} X_i^T X_j$, then it is indeed positive definite in the small sample regime of interest under a probabilistic model for dependent features.

\begin{assumption}
\label{ass:prob}
Suppose that $n$ samples $\{X_1, X_2, \dots, X_n\}$ are i.i.d., with $X_i = \{X_{i}^{(1)}, X_{i}^{(2)}, \dots, X_{i}^{(m)}\}$ being a normal random vector with $X_i \sim \mathcal{N}(0, \Sigma)$, $\forall i \in [n]$. Suppose also that $\|\lambda\|_2 / \|\lambda\|_{\infty} \geq (\ln m)^{\frac{1+\epsilon}{2}} / 2$ for some $\epsilon \in (0, 1]$, where $\lambda := (\lambda^{(1)}, \dots, \lambda^{(m)})$ is comprised of the eigenvalues of $\Sigma$. Note that $X_i^{(p)}$ may be correlated with $X_{i}^{(q)}$, for $p \neq q$.
\end{assumption}

\begin{example}
When $\Sigma = I_{m}$, i.e., $X_i^{(p)} \sim \mathcal{N}(0, 1), \forall p \in [m]$, the condition $\|\lambda\|_2 / \|\lambda\|_{\infty} \geq (\ln m)^{\frac{1+\epsilon}{2}} / 2$ holds trivially since $\|\lambda\|_2 = \sqrt{m}$ and $\|\lambda\|_{\infty} = 1$. We note that for independent features, we can relax the normality (Assumption \ref{ass:prob}) to sub-Gaussianity.
\end{example}

\begin{theorem} \label{thm1}
Let 
\begin{align}
K_{TI,best,linear}(X_i,X_j) &:= \sup_{T \in {\cal{G}}_{trans}} \frac{1}{m} (TX_i)^T X_j
\end{align}
be the best-fit translation invariant kernel with the base kernel chosen as the normalized linear kernel. Under Assumption \ref{ass:prob}, if $n \leq \frac{\|\lambda\|_1}{2\|\lambda\|_2 (\ln m)^{\frac{1+\epsilon}{2}}}$, then $K_{TI,best,linear}$ is a positive definite kernel with probability approaching one, as $m \to \infty$.
\end{theorem}
\begin{proof}[Outline of proof]
We briefly explain ideas behind the proof, with details presented in Appendix \ref{sec:app_thm1}. For brevity, we denote $K_{TI,best,linear}(X_i, X_j)$ and $K_{linear}(X_i, X_j)$ by $K_{TI,ij}$ and $K_{ij}$, respectively. From Gershgorin's circle theorem \cite{varga2010gervsgorin} every eigenvalue of $K_{TI,best,linear}$ lies within at least one of the Gershgorin discs $\mathcal{D}(K_{TI,ii}, r_i) := \{\lambda \in \mathbb{R} \ | \ |\lambda - K_{TI,ii}| \le r_i\}$, where $r_i := \sum_{j \ne i} |K_{TI,ij}|$. Hence if $K_{TI,ii} > \sum_{j \ne i} |K_{TI,ij}|, \  \forall i$, then $K_{TI}$ is a positive definite kernel. Accordingly, we study a tail bound on $\sum_{j \neq i} |K_{Ti, ij}|$ and an upper bound on $K_{TI, ii}$, respectively, to complete the proof.
\end{proof}
Note that $\|\lambda\|_1 \leq \sqrt{m} \|\lambda\|_2$ for an $m$-length vector $\lambda$, which implies that $n = \tilde{O}(\sqrt{m})$.

We now show that positive definiteness in the small sample regime also holds for the polynomial kernels which are of importance in practice:
\begin{align*}
    K_{poly}(X_i,X_j) := (1+ \frac{\gamma}{m} X_i^T X_j)^d \mbox{ for }\gamma \geq 0 \mbox{ and } d \in \mathbb{N}.
\end{align*}
 
\begin{theorem}  \label{thm2}
    For any $\gamma \in \mathbb{R}_+$ and $ d \in \mathbb{N}$,
    the translation-invariant kernels,
    \begin{align}
    K_{TI,best,poly}(X_i,X_j) &:= \sup_{T \in {\cal{G}}_{trans}}(1 + \frac{\gamma}{m}   (TX_i)^TX_j)^d
    \end{align}
    are positive definite with probability approaching one as $m \to +\infty$, under Assumption \ref{ass:prob}, when $\|\lambda\|_{\infty} \leq 1$, and $n \leq \frac{\|\lambda\|_1}{2\|\lambda\|_2 (\ln m)^{\frac{1+\epsilon}{2}}}$.
\end{theorem}

\begin{proof}
Note that $\|\lambda\|_{\infty} \leq 1$ can be satisfied simply by dividing all entries of the $X_i$s by $\sqrt{\|\lambda\|_{\infty}}$. The detailed proof is presented in Appendix \ref{sec:app_thm2}.
\end{proof}
 
\noindent \textbf{Remark.} Since Gershgorin's circle theorem is a conservative bound on the eigenvalues of a matrix, the bound $n = \tilde{O}(\sqrt{m})$ on the number of samples for positive definiteness is also conservative. In practice, positive definiteness of $K_{TI,best,linear}$ holds for larger $n$. Even more usefully, $K_{TI,best,poly}$ is positive definite for a much larger range of $n$ than $K_{TI,best,linear}$, as reported in Table \ref{tab:pd}.

\begin{table}
\caption{The value of $n$ up to which the kernel is positive definite. Positive definiteness continues to hold for moderate sample sizes, indicating that the theorem is conservative.}
\vskip 0.1in
\centering
\begin{tabular}{lrr}
\toprule
Datasets & $K_{TI, best, linear}$ & $K_{TI, best, poly}$\\
\midrule
Original MNIST & $\approx 45$ & $\approx 375$\\
EMNIST & $\approx 35$ & $\approx 395$\\
Translated MNIST & $\approx 455$ & $\approx 15000$\\
\bottomrule
\end{tabular}
\label{tab:pd}
\end{table}

\subsection{Comparison with the Average-Fit Kernel and Data Augmentation}
\subsubsection{Average-Fit Kernel}
In \cite{haasdonk2007invariant}, the following ``average-fit kernel" kernel is considered
\begin{align} \label{Average-Fit}
    K_{\mathcal{G}_{trans},avg,linear}(X_i,X_j) &:= \frac{1}{|{\cal{G}}]} \sum_{T \in {\cal{G}}} K_{linear}(TX_i,X_j),
\end{align} 
which seeks the ``average" fit over all transformations. We denote it by $K_{TI,avg,linear}(X_i,X_j)$ for short. It is trivially invariant with respect to the transformations in $\mathcal{G}$ and positive definite. However, it is not really a desirable choice for translations when the base kernel is the linear kernel. Note that $\frac{1}{|{\cal{G}}_{trans}]} \sum_{T \in {\cal{G}}_{trans}} TX_i = \alpha (1,1, \ldots, 1)^T$, where $\alpha = \frac{1}{m} \sum_{p\in [m_1], q \in [m_2]} X^{(p, q)}_i $ is the average brightness level of $X_i$. Therefore $K_{TI,avg,linear}(X_i,X_j) = m \times$ (Avg brightness level of $X_i$) $\times$ (Avg brightness level of $X_j$). The kernel solely depends on the average brightness levels of the samples, basically blurring out all details in the samples. In the case of rotational invariance, it depends only on the average brightness along each concentric circumference. As expected, it produces very poor results, as seen in the experimental results in Section \ref{Experimental Evaluation}.

\subsubsection{Data Augmentation}
Data augmentation is a popular approach to learn how to recognize translated images. It augments the dataset by creating several translated copies of the existing samples. (A special case is the virtual support vectors method which augments the data with transformations of the support vectors and retrains \cite{scholkopf2002learning}.) SVM with kernel $K_{base}$ applied to the augmented data is equivalent to employing the average-fit kernel as (\ref{Average-Fit}). Consider the case where the augmented data consists of all translates of each image. The resulting dual problem for SVM margin maximization
 \cite{cortes1995support} is: 
\begin{align*}
    \max_{\lambda} \ & -\frac{1}{2} \sum_{i, j, T_1, T_2} \lambda_{i, T_1} \lambda_{j, T_2} y_i y_j K_{base}(T_1 X_{i}, T_2 X_{j}) + \sum_{i, T} \lambda_{i, T} \\
    \text{s.t.} \ & \ \lambda_{i, T} \ge 0, \ \forall i \in [n], \forall T \in \mathcal{G}_{trans}; \ \ \sum_{i, T} \lambda_{i, T} y_i = 0.
\end{align*}
The corresponding classifier is $\text{sign}(\sum_{i, T} \lambda_{i, T}^* y_i K_{base}(T X_{i}, X) + b^*)$, where $\lambda^*$ is the optimal dual variable and $b^* = y_j - \sum_{i, T} \lambda_{i, T}^* y_i K_{base}(T X_{i}, T' X_{j})$, for any $j$ and $T'$ satisfying $\lambda_{j, T'}^* > 0$. When no data augmentation is implemented, i.e., $|\mathcal{G}_{trans}| = 1$, we use $\lambda_i$ as shorthand for $\lambda_{i, 1}$. As shown in Theorem 4.1 \cite{li2019enhanced}, this is simply the dual problem for the SVM with $K_{TI,avg,base}$, and $\sum_i \lambda_i K_{TI,avg,base}(X_i,X_j) = \sum_{i, T \in \mathcal{G}} \lambda_{i, T} K_{base}(T X_i, X_j) \forall j$.
Hence data augmentation is mathematically equivalent to a kernel with average similarity over all transformations. This yields a poor classifier since it only leverages the average brightness level of an image. 

A simple example illustrates superiority of $K_{TI,best,linear}$ over $K_{TI,avg,linear}$ or data augmentation.

\begin{example} \label{example}
Consider a special case of the translation group with $m_1 = 2$ and $m_2 = 1$. Consider a training set with just two samples $X_1 = (1, 2)$ and $ X_2 = (5, 2)$, shown in red in Figure \ref{fig:toy TI}. Note that a translation in the \emph{image space} is a right shift of vector elements. (So a translation of $X_1 = (1, 2)$ yields the augmented datum $X_3= (2, 1)$ in this two-dimensional context because of wraparound.) Two new samples $X_3 = (2, 1)$ and $X_4 = (2, 5)$ are therefore generated through data augmentation, shown in green. The decision boundary of the base kernel with augmented data (equivalent to the average-fit kernel $K_{TI, avg}$) is shown by the blue solid line in Figure \ref{fig:toy TI}. Note that the linear decision boundary $X^{(1)}+X^{(2)}=5$ depends solely on the brightness level $X^{(1)}+X^{(2)}$.

However, the decision boundary of the best-fit kernel $K_{TI,best,linear}(X_i, X_j) = \sup_{T \in {\cal{G}}_{trans}} \frac{1}{2} (T X_i)^T X_j$ is \emph{piecewise linear} due to the ``sup'' operation. Each piece focuses only on the half of the samples that are on the same side of the symmetry axis (black dashed line), leading to the red piecewise linear separatrix with a larger margin. (The red margin is $\frac{\sqrt{10}}{2}$, which is larger than the blue margin $\sqrt{2}$.) The best-fit kernels thereby maximize the margin over the larger class of piecewise linear separatrices by exploiting the invariances. Even when data augmentation is used, it only produces linear separatrices and thus still has smaller margins. Benefits will be even more pronounced in higher dimensions. For other kernels (e.g., polynomial kernels), the shape of the decision boundary will be altered correspondingly (e.g., piecewise polynomial), but the TI best-fit kernel will still provide a larger margin.
\end{example}

\begin{figure}
  \centering
  \begin{minipage}[t]{0.4\linewidth}
    \includegraphics[width=\textwidth]{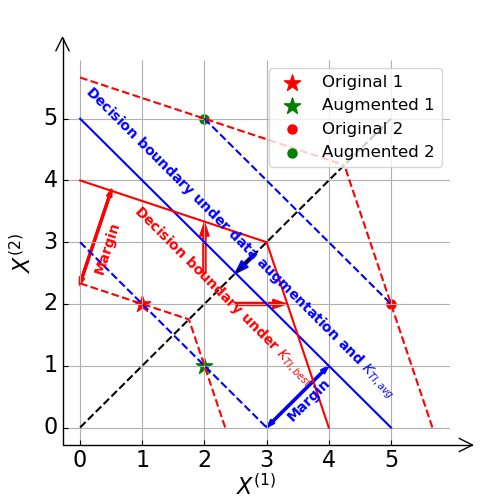}
    \subcaption{}
  \end{minipage}
  \begin{minipage}[t]{0.45\linewidth}
    \vspace{-4.5cm}
    \centering
    \begin{minipage}[t]{\linewidth}
      \includegraphics[width=\textwidth]{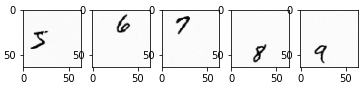}
      \subcaption{Translated MNIST (T-MNIST)}
    \end{minipage}
    \begin{minipage}[b]{\linewidth}
      \includegraphics[width=\textwidth]{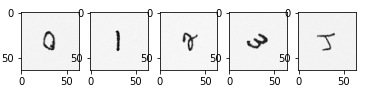}
      \subcaption{Rotated MNIST (R-MNIST)}
    \end{minipage}
  \end{minipage}
  \caption{(a) The kernel $K_{TI,best,linear}$ produces the piecewise linear separatrix shown in red. It yields a larger margin than the blue linear separatrix, i.e., decision boundary, that data augmentation and $K_{TI,avg}$ yield; (b) \& (c) Transformed MNIST with random translation/rotation and Gaussian noise ($\mu = 0, \sigma = 0.1$).}
 \label{fig:toy TI}
\end{figure}

\subsection{Rotation-Invariant Kernels}
To mathematically treat rotation-invariant kernels, consider images that are circular, of radius $r$, with each concentric annulus from radius $\frac{(p-1) r}{m_1}$ to $\frac{p r}{m_1}$, for $p \in [m_1]$, comprised of $m_2$ pixels spanning the sectors $[2k \pi/m_2, 2(k+1)\pi/m_2)$ for $k=0, 1, \ldots, m_2-1$. Denote the element in the $p$-th annulus and $q$-th sector as ``pixel" $X^{(p,q)}$, and define the rotation group $\mathcal{G}_{rotate} := \{T_1, T_2, ... , T_{m_2}\}$, where $T_{q'} X^{(p,q)} = X^{(p,q+q')}$. The rotation-invariant (RI) best-fit kernel is
\begin{align*}
    K_{RI,best,base} (X_i, X_j) := \sup_{T \in \mathcal{G}_{rotate}} K_{base}(T X_i, X_j).
\end{align*}

\begin{lemma}
    Under Assumption \ref{ass:prob} ,
    the rotational invariant kernel $K_{RI,best,poly}$ is positive definite with probability approaching one as $m \to +\infty$, under the same conditions as in Theorem \ref{thm1}. 
\label{lem:RI}
\end{lemma}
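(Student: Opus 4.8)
The plan is to mirror the two-step structure used for the translation-invariant case: first establish positive definiteness for the normalized linear base kernel $K_{RI,best,linear}$ by a Gershgorin diagonal-dominance argument as in Theorem \ref{thm1}, and then lift this to the polynomial kernels $K_{RI,best,poly}$ by exactly the monotonicity and Schur-product argument of Theorem \ref{thm2}. The second step is essentially free: since the supremum commutes with the increasing map $x \mapsto (1 + \gamma m x)^d$, we again obtain $K_{RI,best,poly}(X_i,X_j) = (1 + \gamma m K_{RI,best,linear}(X_i,X_j))^d$, so whenever $K_{RI,best,linear}$ is positive definite, the closure of positive definite kernels under Schur products, nonnegative combinations, and additive positive constants forces $K_{RI,best,poly}$ to be positive definite as well. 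Hence the whole burden falls on the linear case.

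For the linear case I would bound the diagonal and off-diagonal entries separately. The diagonal is immediate: taking $\theta = 0$ shows $K_{RI,ii} = \sup_\theta \frac{1}{m}(T_\theta X_i)^T X_i \ge \frac{1}{m}\|X_i\|^2 = K_{ii}$, so the Gamma$(\frac{m}{2},\frac{m}{2})$ concentration of $K_{ii}$ gives $\mathbb{P}(K_{RI,ii} \le \frac{3}{4}) \le \mathbb{P}(K_{ii} \le \frac{3}{4})$, reproducing (\ref{eqn:ii}) verbatim. The crucial structural fact for the off-diagonal terms is that each $T_\theta$ is an orthogonal transformation, so that the symmetry requirement (\ref{symmetry requirement}) holds (yielding symmetry and rotational invariance through Lemma \ref{lemma:sym}) and, more importantly, the standard Gaussian law is invariant under $T_\theta$. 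Thus for every fixed $\theta$ the vector $T_\theta X_i$ is again $\mathcal{N}(0,I)$ and independent of $X_j$, so $\frac{1}{m}(T_\theta X_i)^T X_j$ has exactly the same distribution as $K_{ij}$ and is sub-exponential with the same parameters $(\sqrt{2m},2)$. The tail bound (\ref{intermediate}) therefore applies unchanged to each rotated inner product.

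The main obstacle, and the only genuine departure from Theorem \ref{thm1}, is that the rotation group is parameterized by a continuum $\theta \in (-\pi,\pi]$ rather than the finite set of $m$ translations, so the union bound (step (b) in Theorem \ref{thm1}) cannot be applied verbatim. I would resolve this by observing that on a fixed $m_1 \times m_2$ pixel grid only finitely many distinct pixel reassignments can arise as $\theta$ varies, so the effective number of rotations $N(m)$ is finite and grows only polynomially in $m$ (indeed, if one discretizes to a fixed angular resolution independent of $m$, then $N(m) = O(1)$). Discretizing $\theta$ at this resolution yields $\mathbb{P}\big(\sup_\theta |K_{linear}(T_\theta X_i, X_j)| \ge \tfrac{t}{n-1}\big) \le N(m)\,\mathbb{P}\big(|K_{ij}| \ge \tfrac{t}{n-1}\big)$. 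Because $N(m)$ is polynomial, it plays precisely the role of the factor $m$ in inequality (b) of Theorem \ref{thm1}: the sub-exponential tail (\ref{intermediate}) decays like $\exp(-m^{2\beta}/16c^2)$, which dominates any polynomial prefactor, so the off-diagonal estimate (\ref{eqn:ij}) goes through with $2nm$ replaced by $2nN(m)$ and still vanishes as $m \to \infty$. Combining the diagonal and off-diagonal estimates through Gershgorin's theorem exactly as at the end of Theorem \ref{thm1} then yields positive definiteness of $K_{RI,best,linear}$ with probability approaching one, and the Schur-product lifting completes the proof for $K_{RI,best,poly}$.
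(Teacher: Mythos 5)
Your proposal is correct and matches the paper's intended argument: the paper gives no explicit proof of Lemma~\ref{lem:RI}, asserting only that ``a similar proof'' to Theorems~\ref{thm1} and~\ref{thm2} applies, and your two-step structure (Gershgorin dominance for $K_{RI,best,linear}$, then the monotonicity/Schur-product lift to $K_{RI,best,poly}$) is exactly that argument. Your handling of the one genuine departure --- replacing the union bound over the $m$ translations by a bound over the finitely many distinct digitized rotations $N(m)$, which grows at most polynomially in $m$ and is therefore dominated by the sub-exponential tail $\exp(-m^{2\beta}/16c^2)$ --- supplies precisely the detail the paper leaves implicit, and is sound provided each digitized $T_\theta$ is read, as the paper implicitly does, as a pixel reindexing under which the i.i.d.\ Gaussian law of $X_i$ is preserved.
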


In Section \ref{Experimental Evaluation}, we report on the large performance gain of $K_{RI,best,poly}$.

\section{Incorporating Locality at Multiple Spatial Scales}
To better illustrate the property of ``locality" and its incorporation into SVMs, consider the simple context of a polynomial kernel and a one-dimensional real-valued pixel sequence.

Let us regard the individual pixel values in the sequence $\{X_{i}^{(1)}, X_{i}^{(2)}, \ldots , X_{i}^{(m)} \}$ as the primitive features at ``Layer 0". Consider now a ``local" feature depending only on the nearby pixels $\{X_{i}^{(\ell)}, X_{i}^{(\ell+1)}, \ldots , X_{i}^{(\ell+k_1)} \}$ that can be modeled by a polynomial of degree $d_1$. We refer to $k_1$ as the locality parameter.

Such a local feature is a linear combination of monomials of the form $\prod_{j=\ell}^{\min(\ell+k_1,m)} (X_{i}^{(j)})^{c_j}$ with $\sum_{j=\ell}^{\min(\ell+k_1,m)} c_j \leq d_1$ where each integer $c_{j} \geq 0$. This gives rise to a kernel
\begin{equation}
\label{basic locality}
    K_{L, ij}^{(1)} := [\sum_{\ell=1}^{m-k_1}(\sum_{p=\ell}^{\ell+k_1} X_{i}^{(p)} X_{j}^{(p)} + 1)^{d_1}+1]^{d_2}.
\end{equation}
We regard ``Layer 1" as comprised of such local features of locality parameter $k_1$, at most $k_1$ apart.

``Layer 2" allows the composition of local features at Layer 1 that are at most $k_2$ apart:
\begin{align}
    K_{L, ij}^{(2)} &:= \{\sum_{g=1}^{m-k_1-k_2}[\sum_{\ell=g}^{g+k_2} (\sum_{p=\ell}^{\ell+k_1} X_{i}^{(p)} X_{j}^{(p)} + 1)^{d_1} 1]^{d_2} + 1\}^{d_3}.
\end{align}
This can be recursively applied to define deeper kernels with locality at several coarser spatial scales.

The above procedure extends naturally to two-dimensional images $\{X_{i}^{(p, q)}: p \in [m_1], q \in [m_2]\}$. Then the kernel at Layer 1 is simply $(\sum_{s=1}^{m_2-k_1} \sum_{\ell=1}^{m_1-k_1} (\sum_{q=s}^{s+k_1}\sum_{p=\ell}^{\ell+k_1} X_{i}^{(p, q)} X_{j}^{(p, q)} + 1)^{d_1}+1)^{d_2}$. The resulting kernels are always positive definite:

\begin{lemma}
    $K_{L}$ is a positive definite kernel. 
    \label{lem:local_pd}
\end{lemma}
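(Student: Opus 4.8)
The plan is to establish that $K_L$ is a positive definite kernel by structural induction on the number of layers, using exactly the three closure properties of positive definite kernels already recorded in the proof of Theorem \ref{thm2}: nonnegative linear combinations, Hadamard (Schur) products and hence integer powers, and the elementwise addition of a nonnegative constant. Everything else reduces to identifying the innermost building blocks and checking that the recursive definition of $K_L$ nests these operations correctly.

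First I would isolate the two base objects. The innermost expression appearing at Layer 1, namely $\sum_{p=\ell}^{\min(\ell+k_1,m)} X_i^{(p)} X_j^{(p)}$, is the standard linear kernel restricted to a contiguous window of coordinates; it is positive definite because it is the Gram matrix of the projected feature vectors $(X_i^{(\ell)}, \ldots, X_i^{(\min(\ell+k_1,m))})$, i.e., an inner product in a lower-dimensional space. The constant kernel equal to $1$ is also positive definite, since its Gram matrix is the rank-one outer product $\mathbf{1}\mathbf{1}^T$. These two facts, together with the three closure operations, are all the argument needs.

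For the base case (Layer 1), the kernel in (\ref{basic locality Layer 1}),
\begin{equation*}
K_{L,ij} = \left[\sum_{p=\ell}^{\min(\ell+k_1,m)} X_i^{(p)} X_j^{(p)} + 1\right]^{d_1},
\end{equation*}
is obtained from the restricted linear kernel by adding the constant $1$ and then taking the $d_1$-th Hadamard power; both operations preserve positive definiteness, so Layer 1 is positive definite. For the inductive step, suppose every sub-kernel produced at Layer $\ell$ is positive definite. Each Layer-$(\ell+1)$ expression has the form $\bigl[\sum_{\text{windows}} (\text{Layer-}\ell \text{ sub-kernel}) + 1\bigr]^{d_{\ell+1}}$, as exhibited by (\ref{basic locality}) for $\ell=1$ and by its three-layer generalization for $\ell=2$. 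By the sum rule the inner window sum is positive definite, by the add-constant rule adding $1$ keeps it positive definite, and by the integer-power rule raising to $d_{\ell+1}$ keeps it positive definite. Hence Layer $\ell+1$ is positive definite, closing the induction; the two-dimensional case is identical except that each window is indexed by a pair rather than a single index.

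I do not expect any genuine obstacle here, since each step is one of the elementary closure properties. The only point requiring care is the formulation of the inductive hypothesis: one must assert that \emph{every} sub-expression fed into the next layer is itself a bona fide positive definite kernel, so that the closure operations may be applied compositionally. Once the hypothesis is phrased at the level of sub-kernels rather than of the final kernel, the recursion defining $K_L$ maps directly onto a nested application of the three operations, and positive definiteness propagates from the innermost restricted linear kernels all the way to the outermost layer.
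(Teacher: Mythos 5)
Your proof is correct and follows essentially the same route as the paper: the paper's own proof simply observes that $K_L$ is built by repeatedly applying the three closure operations from Theorem~\ref{thm2} (nonnegative sums, Schur products/integer powers, adding a positive constant) starting from a base linear kernel, and your layer-by-layer induction is just a more carefully articulated version of that same argument. The only caveat---shared equally by the paper---is that these closure operations strictly preserve positive \emph{semi}-definiteness (e.g., the constant kernel $\mathbf{1}\mathbf{1}^T$ is only semidefinite), so under the paper's strict definition of positive definiteness the statement should really be read in the semidefinite sense; this is not a defect of your proposal relative to the paper's proof.
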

\textit{Proof.}
Note that if $K_1$ and $K_2$ are positive definite kernels, then the following kernels $K$ obtained by Schur products \cite{schur1911bemerkungen}, addition, or adding a positive constant elementwise, are still positive definite kernels: (i)
$K_{ij} = \alpha K_{1, ij} + \beta K_{2, ij}$, (ii)
$K_{ij} = (K_{1, ij})^{\ell_1} (K_{2, ij})^{\ell_2}$, (iii) 
$K_{ij} = K_{1, ij} + \gamma$, $\forall \alpha,\beta \ge 0, \ell_1, \ell_2 \in \mathbb{N}, \gamma \geq 0$. The kernel $K_L$ can be obtained by repeatedly employing the above operations with $\alpha=\beta=\gamma=1$, starting with a base linear kernel
which is positive definite with high probability under the conditions of Theorem \ref{thm2}.
\qed

One difference from CNNs is that, for the same input layer, one cannot have multiple output channels. If we design multiple channels with different degrees, then the channel with a larger degree will automatically subsume all terms generated by the channel with a smaller degree. Therefore, it is equivalent to having only one output channel with the largest degree. However, if the images have multiple channels to start with (as in R, G, and B, for example), then they can be handled separately. But after they are combined at a layer, there can only be one channel at subsequent higher layers.

\paragraph{Combining Locality at Multiple Spatial Scales with Transformational Invariance.}
To combine both locality at multiple spatial scales and transformational invariance, a kernel with locality at multiple spatial scales can be introduced as a base kernel into transformation-invariant kernels. 

\paragraph{Complexity Analysis and Memory Trade-off.}
One may trade off between the memory requirement and computation time when it comes to the depth of the architecture. Supported by adequate memory space, one can store all kernel values from every layer, with both computation time and memory space increasing linearly with depth. In contrast, when limited by memory space, one can store only the kernel values from the final layer. In that case, although the memory requirement does not increase with depth, computation time grows exponentially with depth.  

The time complexity of computing the polynomial kernel is between $O(n^2 m)$ and $O(n^3 m)$ based on LIBSVM \cite{chang2011libsvm}, while space complexity is $O(n^2)$. With sufficient memory of order $O(n^2 m)$, the computations of kernel values can be parallelized so that the time complexity of the locality kernel is considerably reduced to between $O(n^2 k d)$ and $O(n^3 k d)$, where $k$ and $d$ are the locality parameter and the depth, respectively, with $k d \ll m$. Note that since our focus is on small sample sizes, the $O(n^2)$ complexity is acceptable.

\section{Experimental Evaluation}
\label{Experimental Evaluation}
\subsection{Datasets}
We evaluate the performance of the methods developed on four datasets:

\begin{compactitem}[\hspace{0.5cm}]
    \item[1.] The Original MNIST Dataset \cite{lecun-mnisthandwrittendigit-2010}
    
    \item[2.] The EMNIST Letters Dataset \cite{cohen2017emnist}
    
    \item[3.] The Translated MNIST Dataset: 
    Since most generally available datasets appear to have already been centered or otherwise preprocessed, we ``uncenter" them to better verify the accuracy improvement of TI kernels.  We place the objects in a larger (64*64*1) canvas, and then randomly translate them so that they are not necessarily centered but still maintain their integrity. In addition, we add a Gaussian noise ($\mu = 0, \sigma=0.1$) to avoid being able to accurately center the image by calculating the center-of-mass. We call the resulting dataset the ``Translated dataset". Figure \ref{fig:toy TI}(b) shows some samples from different classes of the Translated MNIST dataset.
    
    \item[4.] The Rotated MNIST Dataset:
    We place the original image in the middle of a larger canvas and rotate it, with the blank area after rotation filled with Gaussian noise. RI kernels are not designed to and cannot distinguish between equivalent elements (e.g., 6 versus 9), and so we skip them. Figure \ref{fig:toy TI}(c) displays  samples from different classes of the Rotated MNIST dataset.
\end{compactitem}

\subsection{Experimental Results and Observations}
\label{subsec:result}
\begin{table}
\caption{Original MNIST Dataset and EMNIST Letters Dataset (100, 200, 500 training samples): Test accuracy of newly proposed methods compared with the original SVM, the tangent distance (TD) nearest neighbors (two-sided), the RI-SVM based on Average Fit, and the best CNN. Based on the same training set, our fine-tuned ResNet achieves similar performance as in \cite{d2020structural}.} 
\begin{center}
\begin{tabular}{lrrr:rrr}
\toprule
\multirow{3}{*} {Method} & \multicolumn{3}{r}{Original MNIST} & \multicolumn{3}{r}{EMNIST Letters}\\ 
& \multicolumn{1}{r}{100} & \multicolumn{1}{r}{200} & \multicolumn{1}{r}{500} & \multicolumn{1}{r}{100} & \multicolumn{1}{r}{200} & \multicolumn{1}{r}{500}\\
\cline{2-7} & Acc/\% & Acc/\% & Acc/\% & Acc/\% & Acc/\% & Acc/\%\\
\midrule
L-TI-RI-SVM & {\bf\textit{81.55}} & {\bf\textit{89.23}} & 92.58 & {\bf\textit{44.56}} & {\bf\textit{55.18}} & 66.42\\
TI-RI-SVM & 75.10 & 86.47 & {\bf\textit{93.11}} & 43.16 & 52.40 & {\bf\textit{67.42}} \\
L-TI-SVM & 78.86 & 87.02 & 91.01 & 42.51 & 52.81 & 64.66\\
L-RI-SVM & 77.96 & 83.96 & 89.65 & 38.39 & 47.29 & 59.76\\
TI-SVM & 69.34 & 82.34 & 91.00 & 39.94 & 48.12 & 63.52\\
RI-SVM & 73.82 & 83.60 & 90.19 & 38.03 & 45.02 & 59.04\\
L-SVM & 75.27 & 82.11 & 88.21 & 37.01 & 45.08 & 58.05\\
\cdashline{1-7}[0.8pt/2pt] 
SVM & 68.16 & 78.67 & 87.14 & 36.65 & 42.74 & 56.38\\
TD (two-sided) \cite{simard1998transformation} & 73.04 & 81.68 & 88.15 & 37.99 & 45.31 & 57.63\\
RI-SVM (Average-Fit) & 68.05 & 78.81 & 87.21 & 36.82 & 42.41 & 56.22\\
Best CNN \cite{d2020structural} / ResNet & 68.33 & 83.20 & 91.33 & 33.82 & 53.17 & 57.06\\
\bottomrule
\end{tabular}
\end{center}
\label{org_mnist}
\end{table}

Table \ref{org_mnist} and Figure \ref{size} provide the test accuracy of all methods on the Original and Transformed MNIST Datasets, respectively, while Table \ref{org_mnist} shows the test accuracy for the EMNIST Letters Dataset \cite{cohen2017emnist}. The letters L, TI, and RI represent Locality at multiple spatial scales, TI kernels, and RI kernels, respectively. A combination such as L-TI represents an SVM with both Locality and Translational Invariance. Note that the intended application of our proposed methods is to learn when data is scarce and there is no large database of similar data, which precludes the possibility of pre-training. We provide code at \href{https://github.com/tliu1997/TI-SVM}{https://github.com/tliu1997/TI-SVM}.

For the Original MNIST dataset with 100/200/500 training samples (Table \ref{org_mnist}), after introducing locality and transformational invariance, the classification accuracy is improved from 68.33\%/83.20\%/91.33\% reported by the best CNNs optimized over architectures and dimensions \cite{d2020structural} to 81.55\%/89.23\%/93.11\% respectively. The improvements indicate that the original dataset does not center and deskew objects perfectly. Larger improvements can be observed on the EMNIST Letters dataset \cite{cohen2017emnist} compared with the original SVM, two-sided tangent distance (TD) nearest neighbors \cite{simard1998transformation}, RI kernel based on Average-Fit, and ResNet.
(Since we find that the test accuracy of TD-based nearest neighbors \cite{simard1998transformation} is better than that of TD-based SVMs \cite{haasdonk2002tangent}, we only report results of TD-based nearest neighbors as one of the baselines.) All results are multi-class classification accuracies. 

In Figure \ref{size}, we present the obtained test accuracy as a function of the number of training samples, for two different transformed datasets. Experiments are performed 5 times with mean and standard deviation denoted by the length of the bars around the mean, for 100/300/500/700/1000 training samples respectively. (L-)TI-SVM and (L-)RI-SVM outperform ResNet in many cases when there is no data augmentation since they embed useful domain knowledge for classifiers, especially for the small size regime of training samples. However, with the increase in the number of training samples, the benefits brought by domain knowledge gradually decrease, as shown in Figure \ref{size}. Additionally, the test accuracy of the newly proposed methods has a smaller variance than ResNet's in general.

From the experimental results, we see that all SVMs with newly defined kernels improve upon the test accuracy of the original SVM method, whether they are original datasets or transformed datasets. They also greatly outperform the best CNNs in the small training sample regime of interest. For transformed datasets, improvements are more significant. Note that the performance improvement of proposed methods comes at the cost of longer computation time. When computation time is critical, we may simply use new single methods, i.e., L-SVM, TI-SVM, and RI-SVM, which enjoy a relatively good performance at a small cost of additional computation time.

\begin{figure*}
    \subfloat[][Translated MNIST.]{\includegraphics[width=.45\textwidth]{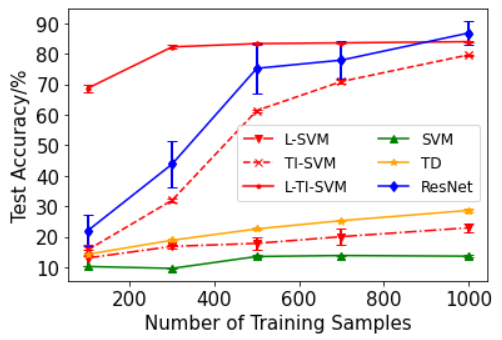}} \quad \quad \quad
    \subfloat[][Rotated MNIST.]{\includegraphics[width=.45\textwidth]{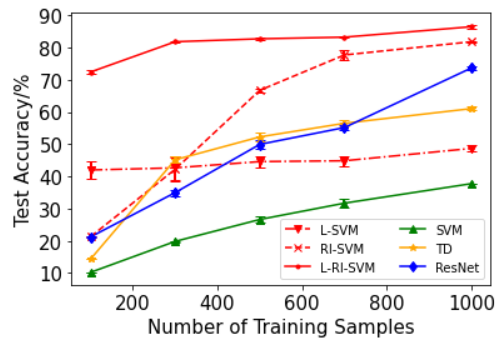}}
\caption{Test accuracy vs. Number of training samples, for Transformed MNIST datasets. L-TI-SVM yields considerable improvement at small sizes in the case of translated samples, while, similarly, L-RI-SVM does so in the case of rotated samples.}
\label{size}
\end{figure*}

\subsection{Details of Experimental Evaluation}
\label{subsec:exp_detail}
With consideration of computational speed and memory space, we utilize a two-layer structure (\ref{basic locality}) as well as a $\frac{k_1-1}{2}$-zero padding and a stride of 1 to implement locality. In order to compare the test accuracy of L-SVM, TI-SVM, RI-SVM and further combine them, we select a polynomial kernel with a fixed degree (8 in our experiments) to realize the proposed methods. Note that degree 8 is not necessarily the optimal degree; one can tune the specific degree for different datasets. 

We compare our results with \cite{d2020structural}, which adopts a tree building technique to examine all the possible combinations of layers and corresponding dimensionalities to find the optimal CNN architecture. As for the DNN benchmark of the EMNIST Letters and the Transformed MNIST datasets, we select ResNet \cite{he2016deep, he2016identity}, a classic CNN architecture, as a DNN benchmark. Plus, for fairness, we do not implement data augmentation for any models and train all from scratch.

Note that all experimental results are based on LIBSVM \cite{chang2011libsvm} and are carried out on an Intel Xeon E5-2697A V4 Linux server with a maximum clock rate of 2.6 GHz and a total memory of 512 GB.

\section{Concluding Remarks}
In this paper, we have developed transformation-invariant kernels that capture domain knowledge of the invariances in the domain. They can also additionally incorporate composition and locality at multiple spatial scales. The resulting kernels appear to provide significantly superior classification performance in the small sample size regime that is of interest in this paper. Experiments demonstrate that for the same polynomial kernel, incorporating locality and transformational invariance improves accuracy, especially for situations where data is scarce.

\section*{Acknowledgement}
This material is based upon work by Texas A\&M University that is partially supported by the US Army Contracting Command under W911NF-22-1-0151, US Office of Naval Research under N00014-21-1-2385, 4/21-22 DARES: Army Research Office W911NF-21-20064, US National Science Foundation under CMMI-2038625. The views expressed herein and conclusions contained in this document are those of the authors and should not be interpreted as representing the views or official policies, either expressed or implied, of the U.S. Army Contracting Command, ONR, ARO, NSF, or the United States Government. The U.S. Government is authorized to reproduce and distribute reprints for Government purposes notwithstanding any copyright notation herein.

\bibliographystyle{abbrv}

\newpage
\appendix

\section{Supporting Definitions and Lemmas}
\begin{definition}\label{eqn:def_exp}
A random variable X with mean $\mu = \mathbb{E}[X]$ is \textit{sub-exponential} if there are non-negative parameters $(\nu, b)$ such that \cite{wainwright2019high}
\begin{align*}
    \mathbb{E}[\exp{(\lambda (X-\mu))}] \le \exp(\frac{\nu^2 \lambda^2}{2}), \ \ \forall |\lambda| < \frac{1}{b}.
\end{align*}
\end{definition}
We denote this by $X \sim SE(\nu,b)$.

\begin{lemma}[Sub-exponential tail bound \cite{wainwright2019high}]
Suppose that $X\sim SE(\nu,b)$. Then,
\begin{align*}
    \mathbb{P}[|X - \mu| \geq t] \le
    \begin{cases}
    2\exp(-\frac{t^2}{2 \nu^2}) & \text{if } 0 \le t \le \frac{\nu^2}{b},\\
    2\exp(-\frac{t}{2b}) & \text{if } t > \frac{\nu^2}{b}.
    \end{cases}
\end{align*}
\label{lem:exp_tail}
\end{lemma}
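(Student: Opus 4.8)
The plan is to prove this classical Chernoff-type tail bound by applying the exponential Markov inequality and then minimizing the resulting exponent over the admissible range of the free parameter $\lambda$, splitting into two regimes according to the magnitude of $t$.

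First I would invoke the exponential Markov (Chernoff) inequality: for any $\lambda \in [0, 1/b)$,
\begin{align*}
\mathbb{P}[X \ge \mu + t] = \mathbb{P}[e^{\lambda(X - \mu)} \ge e^{\lambda t}] \le e^{-\lambda t}\, \mathbb{E}[e^{\lambda(X-\mu)}] \le \exp\left(\frac{\nu^2 \lambda^2}{2} - \lambda t\right),
\end{align*}
where the final inequality uses the sub-exponential moment bound from Definition \ref{eqn:def_exp}, which is valid precisely for $|\lambda| < 1/b$. Writing $g(\lambda) := \frac{\nu^2 \lambda^2}{2} - \lambda t$, the task reduces to minimizing the convex quadratic $g$ over the feasible interval $[0, 1/b)$ and exponentiating the result.

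Next I would locate the unconstrained minimizer $\lambda^\star = t/\nu^2$, at which $g(\lambda^\star) = -t^2/(2\nu^2)$, and split into two cases. In the regime $0 \le t \le \nu^2/b$ one has $\lambda^\star = t/\nu^2 \le 1/b$, so the unconstrained minimizer is admissible and substitution yields directly $\mathbb{P}[X \ge \mu + t] \le \exp(-t^2/(2\nu^2))$. In the regime $t > \nu^2/b$ the minimizer $\lambda^\star$ falls outside $[0,1/b)$; since $g$ is strictly decreasing on $[0,\lambda^\star]$ and $\lambda^\star > 1/b$, the constrained infimum is attained as $\lambda \uparrow 1/b$, giving the boundary value $g(1/b) = \frac{\nu^2}{2b^2} - \frac{t}{b}$.

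The only genuinely delicate step is confirming that this boundary value does not exceed the claimed linear exponent: I would verify $\frac{\nu^2}{2b^2} - \frac{t}{b} \le -\frac{t}{2b}$, which rearranges to $\nu^2/b \le t$, exactly the defining condition of this second regime. A minor technical wrinkle is that Definition \ref{eqn:def_exp} supplies the moment bound only for $|\lambda| < 1/b$ strictly, so rather than evaluating at the endpoint I would take the infimum over $\lambda \in [0,1/b)$ and pass to the limit $\lambda \uparrow 1/b$, using continuity of $g$ and monotonicity of $\exp(\cdot)$ to conclude $\mathbb{P}[X \ge \mu + t] \le \exp(-t/(2b))$. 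Combining the two cases gives the stated piecewise bound. The main obstacle is thus the constrained-optimization bookkeeping in the second regime together with this endpoint limit; the remainder is a routine Chernoff argument.
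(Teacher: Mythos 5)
The paper states this lemma without proof, simply citing \cite{wainwright2015basic}, and your argument is precisely the standard Chernoff-bound proof from that source: bound $\mathbb{P}[X \ge \mu + t] \le \exp\left(\frac{\nu^2\lambda^2}{2} - \lambda t\right)$ for $\lambda \in [0, 1/b)$ via Markov's inequality applied to $e^{\lambda(X-\mu)}$, then minimize over the feasible interval, using the unconstrained minimizer $\lambda^\star = t/\nu^2$ when $t \le \nu^2/b$ and the limit $\lambda \uparrow 1/b$ otherwise. Your proof is correct, including the two genuinely necessary fine points --- taking the infimum over the open interval rather than evaluating at the excluded endpoint $\lambda = 1/b$, and verifying that $\frac{\nu^2}{2b^2} - \frac{t}{b} \le -\frac{t}{2b}$ rearranges exactly to the second-regime condition $t \ge \nu^2/b$ --- so there is nothing to fix.
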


Some properties related to sub-exponential random variables are listed below.
\begin{lemma}[\cite{wainwright2019high}] \label{lem:sub-exp-related}
\begin{enumerate} 
    \item[]
    \item For a standard normal random variable $X$, $X^2$ is SE$(2, 4)$;
    \item For random variable $X \sim SE(\nu, b)$, $a X \sim SE(a \nu, ab)$;
    \item Consider independent random variables $X_{1}, \ldots, X_{n}$, where $X_i \sim SE(\nu_i, b_i)$. Let $\nu = (\nu_1, \ldots, \nu_n)$ and $b = (b_1, \ldots, b_n)$. Then $\sum_{i=1}^n X_{i} \sim SE(\|\nu\|_2, \|b\|_\infty)$.
\end{enumerate}
\end{lemma}

\section{Proof of Theorem \ref{thm1}}
\label{sec:app_thm1}
\begin{theorem}[Restatement of Theorem \ref{thm1}] 
\label{thm1_appendix}
Let 
\begin{align*}
K_{TI,best,linear}(X_i,X_j) &:= \sup_{T \in {\cal{G}}_{trans}} \frac{1}{m} (TX_i)^T X_j
\end{align*}
be the best-fit translation invariant kernel with the base kernel chosen as the normalized linear kernel. Under Assumption \ref{ass:prob}, if $n \leq \frac{\|\lambda\|_1}{2\|\lambda\|_2 (\ln m)^{\frac{1+\epsilon}{2}}}$, then $K_{TI,best,linear}$ is a positive definite kernel with probability approaching one, as $m \to \infty$.   
\end{theorem}

\begin{proof}[Proof of Theorem \ref{thm1}]
For brevity, we denote $K_{TI,best,linear}(X_i, X_j)$ and $K_{linear}(X_i, X_j)$ by $K_{TI,ij}$ and $K_{ij}$, respectively. 
From Gershgorin's circle theorem \cite{varga2010gervsgorin} every eigenvalue of $K_{TI,best,linear}$ lies within at least one of the Gershgorin discs $\mathcal{D}(K_{TI,ii}, r_i) := \{\lambda \in \mathbb{R} \ | \ |\lambda - K_{TI,ii}| \le r_i\}$, where $r_i := \sum_{j \ne i} |K_{TI,ij}|$. Hence if $K_{TI,ii} > \sum_{j \ne i} |K_{TI,ij}|, \ \forall i$, then $K_{TI,best,linear}$ is a positive definite kernel.

Note that the dimension of the vector $X_i$ is $m = m_1 \times m_2$ in the case of a two-dimensional array. Under Assumption \ref{ass:prob}, $X_i \sim \mathcal{N}(0, \Sigma), \forall i \in [n]$. Since $\Sigma$ is symmetric and positive semi-definite, there exists an orthogonal matrix $O \in \mathbb{R}^{m \times m}$ with $\Sigma = O \cdot \operatorname{diag}\left(\lambda^{(1)}, \ldots, \lambda^{(m)}\right) \cdot O^{T}$, where $\lambda^{(1)}, \ldots, \lambda^{(m)} \geq 0$ are the eigenvalues of $\Sigma$. Define $\Sigma^{\frac{1}{2}} := O \cdot \operatorname{diag}\left(\sqrt{\lambda^{(1)}}, \ldots, \sqrt{\lambda^{(m)}}\right) \cdot O^{T}$, then $X_i = \Sigma^{\frac{1}{2}} Z_i$, where $Z_i \sim \mathcal{N}(0, I_{m})$. Define $H_i := O^T Z_i \sim \mathcal{N}(0, I_m)$, then
\begin{align*}
    \|X_i\|^{2} &= \left\|O \operatorname{diag}\left(\sqrt{\lambda^{(1)}}, \ldots, \sqrt{\lambda^{(m)}}\right) O^{T} Z_i\right\|^{2} \\
    &= \left\|\operatorname{diag}\left(\sqrt{\lambda^{(1)}}, \ldots, \sqrt{\lambda^{(m)}}\right) H_i\right\|^{2} = \sum_{p=1}^{m} \lambda^{(p)} (H_i^{(p)})^{2},
\end{align*}
and $\mathbb{E}[\|X_i\|^2] = \sum_{p=1}^{m} \lambda^{(p)} \mathbb{E}[(H_i^{(p)})^{2}] = \|\lambda\|_1$. Let $\lambda := (\lambda^{(1)}, \dots, \lambda^{(m)})$. Based on Lemma \ref{lem:sub-exp-related}, we have $(H_i^{(p)})^2 \sim SE(2, 4)$, $\lambda^{(p)} (H_i^{(p)})^2 \sim SE(2\lambda^{(p)}, 4\lambda^{(p)})$, and $\|X_i\|^2 \sim SE(2\|\lambda\|_2, 4\|\lambda\|_{\infty})$. According to Lemma \ref{lem:exp_tail},
\begin{align*}
    \mathbb{P}\left[\frac{1}{m} \|X_i\|^2 \le \frac{\|\lambda\|_1}{m} - \frac{t}{m}\right] \le
    \begin{cases}
    \exp(-\frac{t^2}{8 \|\lambda\|_2^2}) & \text{if } 0 \le t \le \frac{\|\lambda\|_2^2}{ \|\lambda\|_{\infty}},\\
    \exp(-\frac{t}{8 \|\lambda\|_{\infty}}) & \text{if } t > \frac{\|\lambda\|_2^2}{ \|\lambda\|_{\infty}}.
    \end{cases}
\end{align*}

Let $t = \|\lambda\|_1 / 2$, then we have
\begin{align*}
    \mathbb{P}\left(K_{ii} \le \frac{1}{2m} \|\lambda\|_1 \right) &\le \max \left(\exp\left(-\frac{\|\lambda\|_1^2}{32\|\lambda\|_2^2}\right), \exp\left(-\frac{\|\lambda\|_1}{16\|\lambda\|_{\infty}}\right)\right) \\
    &\stackrel{(a)}{\leq} \exp\left(-\frac{\|\lambda\|_1}{32\|\lambda\|_{\infty}}\right),
\end{align*}
where $(a)$ holds due to Holder's inequality. Noting that $K_{TI, ii} = \max_{T \in \mathcal{G}} K_{linear}(T X_i, X_i) \ge K_{ii}, \forall i$,
\begin{align}
\label{eqn:diagnoal}
     \mathbb{P}(K_{TI, ii} \le \frac{1}{2m} \|\lambda\|_1) &\le  \mathbb{P}(K_{ii} \le \frac{1}{2m} \|\lambda\|_1) \le \exp\left(-\frac{\|\lambda\|_1}{32\|\lambda\|_{\infty}}\right).
\end{align}

Now we turn to the off-diagonal terms $K_{TI, ij}$ for $i \neq j$. For $p \in [m]$, one can write
\begin{align*}
    X_i^{(p)} X_j^{(p)} = (O \Lambda^{\frac{1}{2}} H_i)^T(O \Lambda^{\frac{1}{2}} H_j) = H_i^T \Lambda H_j = \sum_{p=1}^m \lambda^{(p)} H_i^{(p)} H_j^{(p)}.
\end{align*}
Note that $H_i^{(p)} H_j^{(p)} = \frac{1}{2} (Y_{+,ij}^{(p)} - Y_{-,ij}^{(p)}) (Y_{+,ij}^{(p)} + Y_{-,ij}^{(p)})$, where $Y_{+,ij}^{(p)}:= \frac{1}{\sqrt{2}} (H_j^{(p)} + H_i^{(p)})$ and $Y_{-,ij}^{(p)}:= \frac{1}{\sqrt{2}} (H_j^{(p)} - H_i^{(p)})$ are independent $N(0,1)$ random variables. Hence $(Y_{+,ij}^{(p)})^2$ and $(Y_{-,ij}^{(p)})^2$ are chi-squared random variables, and their moment generating functions are
$\mathbb{E}[e^{r (Y_{+,ij}^{(p)})^2}] = \mathbb{E}[e^{r (Y_{-,ij}^{(p)})^2}] = \frac{1}{\sqrt{1 - 2 r}}$ for $r < \frac{1}{2}$. Hence for any $|r| \leq 1/\|\lambda\|_\infty$, we know
\begin{align*}
    \mathbb{E}[e^{r X_i^\top X_j}] &= \mathbb{E}\left[\exp \left(r \sum_{p=1}^m \lambda^{(p)} H_i^{(p)} H_j^{(p)} )\right)\right]
     = \mathbb{E}\left[\exp \left(\frac{r}{2} \sum_{p=1}^m \lambda^{(p)} \left((Y_{+,ij}^{(p)})^2 - (Y_{-,ij}^{(p)})^2)\right)\right)\right] \\
    &\stackrel{(b)}{=} \prod_{p=1}^m \mathbb{E}\left[\exp(\frac{r \lambda^{(p)}}{2} (Y_{+,ij}^{(p)})^2)\right]\mathbb{E}\left[\exp(-\frac{r \lambda^{(p)}}{2} (Y_{-,ij}^{(p)})^2) \right] \\
    &= \prod_{p=1}^m \frac{1}{\sqrt{1 - (\lambda^{(p)})^2 r^2}},
\end{align*}
where $(b)$ is true since the random variables $\{ Y_{+,ij}^{(p)}, Y_{-,ij}^{(p)} \}_{p \in [m]}$ are mutually independent. It can be verified that $\frac{1}{\sqrt{1 - x}} \leq e^{x}$ for $0 \leq x \leq 1/2$. We can then upper bound the moment generating function of $ \mathbb{E}[e^{r X_i^\top X_j}]$, since for any $|r| \leq 1/ \|\lambda\|_\infty$, 
\begin{align*}
     \mathbb{E}[e^{r X_i^\top X_j}] = \prod_{p=1}^m \frac{1}{\sqrt{1 - (\lambda^{(p)})^2 r^2}} \leq \prod_{p=1}^m \exp\left((\lambda^{(p)})^2 r^2 \right) = \exp\left( \frac{2\|\lambda\|_2^2}{2} r^2 \right).
\end{align*}

This implies $X_i^\top X_j \sim SE( \sqrt{2}\|\lambda\|_2, \|\lambda\|_\infty)$. Since $\mathbb{E}[X_i^\top X_j] = 0$, by Lemma \ref{lem:exp_tail}, we know
\begin{align*}
    \mathbb{P}\left[X_i^\top X_j \geq  t\right] \leq
    \begin{cases}
    \exp(-\frac{t^2}{4 \|\lambda\|_2^2}) & \text{if } 0 \le t \le \frac{2\|\lambda\|_2^2}{ \|\lambda\|_{\infty}},\\
    \exp(-\frac{t}{2 \|\lambda\|_{\infty}}) & \text{if } t > \frac{2\|\lambda\|_2^2}{ \|\lambda\|_{\infty}}.
    \end{cases}
\end{align*}
Let $t = \|\lambda\|_2 (\ln m)^{(1 + \epsilon)/2}$. Under the assumption that $\frac{\|\lambda\|_{2}}{\|\lambda\|_\infty} \geq \frac{(\ln m)^{(1+\epsilon)/2}}{2}$, we have
\begin{align}
\label{eqn:concentration_bound}
    \mathbb{P}\left[K_{ij} \geq \frac{\|\lambda\|_2 (\ln m)^{\frac{1+\epsilon}{2}}}{m}\right] \leq \exp(-\frac{(\ln m)^{1 + \epsilon}}{4}).
\end{align}
Note that $|K_{TI, ij}| = |\sup_T K_{linear}(T X_i, X_j)| \le \sup_T |K_{linear}(T X_i, X_j)|$, then
\begin{align}
\label{eqn:non-diagonal}
\mathbb{P}(|K_{TI, ij}| \ge \frac{\|\lambda\|_2 (\ln m)^{\frac{1+\epsilon}{2}}}{m}) &\le \mathbb{P}(\sup_T |K_{linear}(T X_i, X_j)| \ge \frac{\|\lambda\|_2 (\ln m)^{\frac{1+\epsilon}{2}}}{m}) \notag\\
&\stackrel{(c)}{\leq} m \mathbb{P}(|K_{ij}| \ge \frac{\|\lambda\|_2 (\ln m)^{\frac{1+\epsilon}{2}}}{m}) \stackrel{(d)}{\leq} 2m \mathbb{P}(K_{ij} \ge \frac{\|\lambda\|_2 (\ln m)^{\frac{1+\epsilon}{2}}}{m}) \notag\\
&\leq \exp(-\frac{(\ln m)^{1 + \epsilon}}{4} + \ln 2m),
\end{align}
where $(c)$ holds since the distribution is translation invariant, and $(d)$ holds since $K_{ij}$ is a symmetric random variable, i.e., $p_{K_{ij}}(y) = p_{K_{ij}}(-y)$. Combining (\ref{eqn:diagnoal}) and (\ref{eqn:non-diagonal}), we have
\begin{align*}
&\mathbb{P}(K_{TI, ii} > \sum_{j \neq i} |K_{TI, ij}|, \forall i) \ge \mathbb{P}(\min_i K_{TI, ii} > \max_i \sum_{j \neq i}|K_{TI_ij}|)\\
& \  \ \ \ \ \ \ \ \ \ \ \ \ \ \ \ \ge  \mathbb{P}(\min_i K_{TI, ii} > \frac{\|\lambda\|_1}{2 m} \text{ and } \max_i \sum_{j \ne i}|K_{TI, ij}|<\frac{\|\lambda\|_2 (\ln m)^{\frac{1+\epsilon}{2}}n}{m})\\
& \  \ \ \ \ \ \ \ \ \ \ \ \ \ \ \ = 1 - \mathbb{P}(\min_i K_{TI, ii} \le \frac{\|\lambda\|_1}{2 m} \text{ or } \max_i \sum_{j \neq i} |K_{TI, ij}| \ge \frac{\|\lambda\|_2 (\ln m)^{\frac{1+\epsilon}{2}}n}{m})\\
& \  \ \ \ \ \ \ \ \ \ \ \ \ \ \ \ \ge 1 - \mathbb{P}(\min_i K_{TI,ii} \le \frac{\|\lambda\|_1}{2 m}) - \mathbb{P}(\max_i \sum_{j \neq i}|K_{Ti, ij}| \ge \frac{\|\lambda\|_2 (\ln m)^{\frac{1+\epsilon}{2}}n}{m}) \\
& \  \ \ \ \ \ \ \ \ \ \ \ \ \ \ \ = 1 - \mathbb{P}(K_{TI,ii} \le \frac{\|\lambda\|_1}{2 m}, \ \exists i) - \mathbb{P}(\sum_{j \neq i}|K_{Ti, ij}| \ge \frac{\|\lambda\|_2 (\ln m)^{\frac{1+\epsilon}{2}}n}{m}), \ \exists i) \\
& \  \ \ \ \ \ \ \ \ \ \ \ \ \ \ \ \stackrel{(e)}{\ge} 1 - n \mathbb{P}(K_{TI,ii} \le \frac{\|\lambda\|_1}{2 m}) - n \mathbb{P}(|K_{Ti, ij}| \ge \frac{\|\lambda\|_2 (\ln m)^{\frac{1+\epsilon}{2}}}{m}, \ \exists j \neq i)\\
& \  \ \ \ \ \ \ \ \ \ \ \ \ \ \ \ \ge 1 - \exp(-\frac{\|\lambda\|_1}{32\|\lambda\|_{\infty}} + \ln n) - \exp(-\frac{1}{4} (\ln m)^{1+\epsilon} + 2 \ln n + \ln 2m).
\end{align*}
Above, $(e)$ holds since the probability distributions of $|K_{TI, ij}|$ are identical for all $j \neq i$. Since $\|\lambda\|_1 \geq \|\lambda\|_2$, the assumption $\frac{\|\lambda\|_{2}}{\|\lambda\|_\infty} \geq (\ln m)^{(1+\epsilon)/2}$ implies $\frac{\|\lambda\|_1}{\|\lambda\|_\infty} \geq (\ln m)^{(1+\epsilon)/2}$. Note that since $n = \tilde{O}(\sqrt{m})$,
\begin{align*}
    \lim_{m \to \infty} \mathbb{P}(K_{TI, ii} > \sum_{j \neq i} |K_{TI, ij}|, \forall i) \geq \lim_{m \to \infty} 1 - \frac{1}{poly(m)} = 1.
\end{align*}

Therefore, $K_{TI,best,linear}$ is a positive definite kernel with probability approaching one as $m \to \infty$.
\end{proof}

\section{Proof of Theorem \ref{thm2}}
\label{sec:app_thm2}
\begin{theorem}[Restatement of Theorem \ref{thm2}]
    For any $\gamma \in \mathbb{R}_+$ and $ d \in \mathbb{N}$,
    the translation-invariant kernels,
    \begin{align*}
    K_{TI,best,poly}(X_i,X_j) &:= \sup_{T \in {\cal{G}}_{trans}}(1 + \frac{\gamma}{m}   (TX_i)^TX_j)^d
    \end{align*}
    are positive definite with probability approaching 1 as $m \to +\infty$, under Assumption \ref{ass:prob}, $\|\lambda\|_{\infty} \leq 1$, and $n \leq \frac{\|\lambda\|_1}{2\|\lambda\|_2 (\ln m)^{\frac{1+\epsilon}{2}}}$.
\end{theorem}

\begin{proof}
Define event $A_{\gamma,m,n} := \{\frac{\gamma}{m}(TX_i)^\top X_j \geq -1,\forall T \in \mathcal{G}, \forall i \not= j\}$, and event $B_{m,n} := \{K_{TI,best,linear}{~is~pd}\}$. Denote $K(\cdot,\cdot) = (1 + \gamma K_{TI,best,linear}(\cdot, \cdot))^d$. Conditioned on event $A_{\gamma, m,n}$, $K_{TI,best,poly}(X_i,X_j) = K(X_i,X_j)$ for off-diagonal entries, and $K_{TI,best,poly}(X_i, X_i) \geq K(X_i, X_i)$. This implies $K_{TI,best,poly} \succeq K$. Conditioned on event $B_{m,n}$, the three properties in the proof of Lemma \ref{lem:local_pd} indicate $K$ is pd. Thus $K_{TI,best,poly}$ is pd conditioned on $A_{\gamma,m,n} \cap B_{m,n}$. 
 
Now $B_{m,n}$ holds w.h.p. by Theorem \ref{thm1}. By the symmetric distribution of $K_{ij}$ and (\ref{eqn:concentration_bound}), we have
\begin{align*}
    \mathbb{P}(\frac{\gamma}{m} (X_i)^T X_j \leq -\frac{\gamma \|\lambda\|_2 (\ln m)^{\frac{1+\epsilon}{2}}}{m}) \leq \exp(-\frac{(\ln m)^{1+\epsilon}}{4}).
\end{align*}
Due to $\|\lambda\|_2 \leq \sqrt{m}$ and union bounds, $\lim_{m \to \infty} \mathbb{P}(A_{\gamma, m, n}) \geq \lim_{m \to \infty} 1 - \frac{1}{poly(m)} = 1.$
\end{proof}

\section{Extensions to other kernels}
The property of transformational invariance can be extended to other kernels with the guarantee of positive definiteness, e.g., radial basis function (RBF) kernels with norm-preserving transformations (i.e., $\|T X_i\| = \|X_i\|, \forall i \in [n]$). Define the normalized base kernel as
    \begin{align*}
        K_{RBF}(X_i, X_j) := \exp(-\frac{\|X_i - X_j\|^2}{2 m \sigma^2}) = \exp(-\frac{\|X_i\|^2 + \|X_j\|^2}{2 m \sigma^2}) \exp(\frac{X_i^T X_j}{ m \sigma^2}).
    \end{align*}
    If transformations are norm-preserving, then
    \begin{align*}
        K_{TI, best, RBF}(X_i, X_j) &:= \exp(-\frac{\|X_i\|^2 + \|X_j\|^2}{2 m \sigma^2}) \sup_{T \in \mathcal{G}_{trans}} \exp(\frac{(T X_i)^T X_j}{m \sigma^2}) \\
        &= \exp(-\frac{\|X_i\|^2 + \|X_j\|^2}{2 m \sigma^2}) \exp(\sup_{T \in \mathcal{G}_{trans}} \frac{(T X_i)^T X_j}{ m \sigma^2}).
    \end{align*}
    By Theorem \ref{thm1} and three properties in the proof of Lemma \ref{lem:local_pd}, $K_{TI, best, rbf}$ is still positive definite since $e^x = \sum_{i=0}^{\infty} \frac{x^i}{i!}$. However, the design of locality at multiple scales doesn't apply to RBF kernels since the kernel tricks cannot be utilized anymore. Since we merge two designs in the experimental part, we choose polynomial kernels to illustrate the designs in the paper. 

\end{document}